\newtheorem{theorem}{Theorem}
\newtheorem{corollary}{Corollary}
\newtheorem{proposition}{Proposition}
\newtheorem{remark}{Remark}
\newtheorem{definition}{Definition}
\title{Closed-Form Robustness Bounds for Second-Order Pruning of Neural Controller Policies}
\author{%
  Maksym Shamrai\\
  Institute of Mathematics\\
  National Academy of Sciences of Ukraine\\
  Kyiv, Ukraine\\
  \texttt{m.shamrai@imath.kiev.ua}
}
\begin{document}

\maketitle

\begin{abstract}
Deep neural policies have unlocked agile flight for quadcopters, adaptive grasping for manipulators, and reliable navigation for ground robots, yet their millions of weights conflict with the tight memory and real-time constraints of embedded microcontrollers.
Second-order pruning methods, such as \emph{Optimal Brain Damage} (OBD) and its
variants, including \emph{Optimal Brain Surgeon} (OBS) and the recent
\textsc{SparseGPT}, compress networks in a single pass by leveraging the local Hessian, achieving far higher sparsity than magnitude thresholding.
Despite their success in vision and language, the consequences of such weight removal on \emph{closed-loop} stability, tracking accuracy, and safety have remained unclear.

We present the first mathematically rigorous robustness analysis of
second-order pruning in nonlinear discrete-time control.
The system evolves under a continuous transition map, while the controller is an \(L\)-layer multilayer perceptron with ReLU-type activations that are \emph{globally} \(1\)-Lipschitz. 
Pruning the weight matrix of layer \(k\) replaces \(W_k\) with \(W_k+\delta W_k\), producing the perturbed parameter vector \(\widehat{\Theta}=\Theta+\delta\Theta\) and the pruned policy \(\pi(\cdot;\widehat{\Theta})\).

For every input state \(s\in X\) we derive the closed-form inequality
\[
\|\pi(s;\Theta)-\pi(s;\widehat{\Theta})\|_2
\le
C_k(s)\,\|\delta W_k\|_2,
\]
where the constant \(C_k(s)\) depends \emph{only} on unpruned spectral norms and biases, and can be
evaluated in closed form from a single forward pass.

The derived bounds specify, prior to field deployment, the maximal admissible pruning magnitude compatible with a prescribed control-error threshold.
By linking second-order network compression with closed-loop performance guarantees, our work narrows a crucial gap between modern deep-learning tooling and the robustness demands of safety-critical autonomous systems.
\end{abstract}

\section{Introduction}\label{sec:intro}

Modern autonomous systems, from agile quadcopters to embodied
household robots, are increasingly controlled by
\emph{neural policies} that map high-dimensional sensor data directly to
control actions.
Rich function classes such as multilayer perceptrons (MLPs),
convolutional networks, or
\emph{Vision–Language–Action} (VLA) models
achieve impressive closed-loop performance when trained
by reinforcement learning (RL) or behaviour cloning
\cite{mnih2015human, black2024pi_0}.
Yet deploying those policies on size, weight, and power-constrained
hardware demands aggressive \emph{model compression}.
Among the most practical compression techniques are
\emph{second-order pruning} algorithms: Optimal Brain Damage (OBD)
\cite{lecun1989optimal}, Optimal Brain Surgeon (OBS)
\cite{hassibi1992second}, and their recent large-model successor
\emph{SparseGPT}~\cite{frantar2023sparsegpt}, which remove weights that
minimise an analytically estimated activation loss.

While second-order pruning is widely used for
large language models, its impact on the \emph{closed-loop behaviour} of
a control system is poorly investigated.
A small perturbation of the weights can propagate through the policy
network, alter the control signal, and ultimately degrade safety or
task performance.
Precise guarantees on how much pruning a controller can tolerate are
therefore crucial for safety-critical applications
\cite{gu2022review, virmauxLipschitzRegularityDeep2018, zhangRethinkingLipschitzNeural2022, nadizar2021effects}. To the best of our knowledge, \emph{no prior work provides a rigorous,
closed–form upper bound on the control error induced by
second–order pruning} in nonlinear systems.

We provide the first \emph{robustness analysis} of OBD/OBS-style weight
pruning for deterministic nonlinear control systems.
Our goal is to bound, in closed form, the deviation of the pruned control signal.

\paragraph{Contributions.}
\begin{enumerate}
  \item \textbf{Formal robustness framework.}  
        Section~\ref{sec:setup} casts pruning as a perturbation of the
        parameter vector~$\Theta$ and formulates the \emph{pruning
        robustness problem} via the control–error bound
        \eqref{eq:control-bound}.
  \item \textbf{Tight single–layer bound.}
        Theorem~\ref{thm:robust} proves that for any 1–Lipschitz
        (ReLU–type) activation the deviation of a pruned layer~$k$ is
        \[
            \bigl\|\pi(s;\Theta)-\pi(s;\widehat{\Theta})\bigr\|_{2}
            \;\le\;
            C_{k}(s)\,\|\delta W_{k}\|_{2},
        \]
        where the constant $C_{k}(s)$ depends only on \emph{unpruned}
        weights, biases and the input norm.
  \item \textbf{Extension to multiple layers.}
        Corollary~\ref{cor:multi} extends the result to an arbitrary set
        of pruned layers in an \emph{additive} fashion, yielding the
        computable bound
        $B_\pi(\delta\Theta)=\sum_{k\in S}C_{k,\max}\|\delta W_{k}\|_{2}$.
  \item \textbf{Practical implications.}
        The bounds can be evaluated \emph{offline} from a forward pass
        and spectral norms alone, enabling principled trade–offs
        between compression ratio and control robustness without
        repeated interaction with the physical system.
\end{enumerate}

%-------------------------------------------------------------------------------
\section{Formal Problem Setup}\label{sec:setup}

\subsection{Deterministic nonlinear control problem}

\begin{definition}[Controlled dynamics]\label{def:dynamics}
Let the \textbf{state space} be $X \subset \mathbb{R}^n$ and the
\textbf{action space} be $U \subset \mathbb{R}^m$.
A trajectory $\{x_t\}_{t\ge 0}$ evolves according to the discrete--time
difference equation
\[
        x_{t+1}\;=\;f\!\bigl(x_t,u_t\bigr),\qquad
        t=0,1,2,\dots ,
\]
where the transition map
$f : X\times U \rightarrow X$ is continuous, ensuring that a unique
trajectory exists for every admissible control sequence
$\{u_t\}_{t\ge 0}$ and initial state $x_0\in X$.
\end{definition}

\begin{definition}[Parametric neural policy]\label{def:policy}
Let $L\in\mathbb{N}$ be the number of affine layers.
Collect all weights and biases in the parameter vector
\[
        \Theta
        \;=\;\bigl\{W_\ell,b_\ell\bigr\}_{\ell=1}^{L}\in\mathbb{R}^{q}.
\]
The \textbf{policy}
$\pi(\,\cdot\,;\Theta):X\rightarrow U$ is the neural network obtained by
composing these affine layers with point\-wise, $1$--Lipschitz
activations~$\sigma$:
\[
\pi(x;\Theta)
\;:=\;
(\sigma\!\circ\! A_L)\circ\cdots\circ(\sigma\!\circ\! A_1)(x),\qquad
A_\ell(z)\;=\;W_\ell z + b_\ell .
\]
The control applied at time~$t$ is $u_t=\pi(x_t;\Theta)$.
\end{definition}

\begin{definition}[Discounted return]\label{def:return}
Fix an initial distribution $x_0\sim\mu$, a bounded reward
$r:X\times U\rightarrow\mathbb{R}$, and a discount factor
$\gamma\in(0,1)$.  The expected return of policy $\pi(\cdot;\Theta)$ is
\[
        J(\Theta)
        \;:=\;
        \mathbb{E}_{x_0\sim\mu}
        \bigl[
            \textstyle\sum_{t=0}^{\infty}
            \gamma^{t}\,r\bigl(x_t,\pi(x_t;\Theta)\bigr)
        \bigr].
\]
An \textbf{optimal policy} is any maximiser
$\Theta^\star\in\arg\max_{\Theta\in\mathbb{R}^{q}}J(\Theta)$.
\end{definition}

%-------------------------------------------------------------------------------
\subsection{Second--order (OBD) pruning criterion}

Throughout this paper we study the effect of pruning individual weights
in~$\Theta$ by the \emph{Optimal Brain Damage (OBD)} saliency
\cite{lecun1989optimal}.  Consider a single affine layer
$A_\ell(z)=W_\ell z+b_\ell$ and an input mini--batch
$X_\ell\in\mathbb{R}^{d\times n_\ell}$.
The change in pre--activation outputs caused by replacing
$W_\ell$ with $\widehat{W}_\ell=W_\ell+\delta W_\ell$ is measured by
\begin{equation}\label{eq:activ-loss}
        E_\ell(W_\ell,\widehat{W}_\ell)
        \;=\;
        \bigl\|W_\ell X_\ell-\widehat{W}_\ell X_\ell\bigr\|_F^{2}.
\end{equation}
A second--order Taylor expansion of $E_\ell$ around $W_\ell$ yields
\[
        E_\ell(W_\ell+\delta W_\ell)
        \;\approx\;
        \tfrac12
        \operatorname{vec}(\delta W_\ell)^{\!\top}
        H_\ell\,
        \operatorname{vec}(\delta W_\ell),
        \qquad
        H_\ell:=\nabla^{2}_{W_\ell}E_\ell \;.
\]
Pruning a \emph{single} weight $w_q\in\Theta$ (that is, setting
$\widehat{w}_q=0$)
gives the OBD saliency
\begin{equation}\label{eq:obd-saliency}
        \Delta E_q
        \;=\;
        \frac12\,\frac{w_q^{2}}{(H_\ell^{-1})_{qq}},
\end{equation}
which serves as an importance score for that weight.
Aggregating \eqref{eq:obd-saliency} over the layers to which OBD is
applied produces a pruned parameter vector
$\widehat{\Theta}\in\mathbb{R}^{q}$ and the perturbation
$\delta\Theta:=\widehat{\Theta}-\Theta$.

%-------------------------------------------------------------------------------
\subsection{Pruning robustness problem}

Our objective is to quantify \textbf{how sensitive the closed--loop
behaviour is to the OBD perturbation}~$\delta\Theta$.
Specifically, we seek a computable bound
$B_{\pi}:\mathbb{R}^{q}\rightarrow\mathbb{R}_{+}$ such that
\begin{equation}\label{eq:control-bound}
        \bigl\|\pi(x;\Theta)-\pi(x;\widehat{\Theta})\bigr\|_{2}
        \;\le\;
        B_{\pi}(\delta\Theta),
        \qquad
        \forall\,x\in X .
\end{equation}
Although inequality \eqref{eq:control-bound} applies to \emph{any}
feed--forward network, we focus on the control setting because the
resulting guarantees translate directly into performance and safety
margins for autonomous systems.

\begin{remark}
For clarity, we treat each weight matrix $W_\ell$ and bias $b_\ell$ as a
distinct block within~$\Theta$.  Hence any scalar weight~$w_q$ appearing
in \eqref{eq:obd-saliency} is an element of~$\Theta$, and replacing
$w_q$ by $0$ modifies $\Theta$ exactly as required in the control
bound~\eqref{eq:control-bound}.
\end{remark}

%-------------------------------------------------------------------------------
\section{Background}\label{sec:background}

All robustness estimates in this paper rest on \emph{Lipschitz control}  
of the policy network.  We collect the required analytic facts in this
section.

\subsection{Lipschitz mappings}

\begin{definition}[Global Lipschitz continuity]\label{def:lipschitz}
A function $f:\mathbb{R}^{n}\!\to\!\mathbb{R}^{m}$ is called
\textbf{$L$–Lipschitz} with respect to the Euclidean norm if
\begin{equation}\label{eq:lipschitz}
        \|f(x)-f(y)\|_{2}
        \;\le\;
        L\,\|x-y\|_{2},
        \qquad
        \forall\,x,y\in\mathbb{R}^{n}.
\end{equation}
The smallest such constant is denoted $L(f)$.
\end{definition}

\begin{theorem}[Rademacher \cite{evans2018measure}]\label{thm:rademacher}
Every locally Lipschitz map
$f:\mathbb{R}^{n}\!\to\!\mathbb{R}^{m}$ is differentiable almost
everywhere and
\(
        L(f)
        \;=\;
        \operatorname*{ess\,sup}_{x}\,\|Df(x)\|_{2},
\)
where $\|\cdot\|_{2}$ is the operator norm induced by $\ell_{2}$.
\end{theorem}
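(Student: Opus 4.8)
The statement bundles two classical facts, and the plan is to handle them in turn: almost-everywhere differentiability of a locally Lipschitz map, and the identification of the optimal Lipschitz constant with the essential supremum of $\|Df\|_2$.

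\textbf{Step 1: reduce, then produce directional derivatives.} I would first reduce to a scalar-valued, globally Lipschitz function: a vector map is differentiable at a point iff each coordinate is, a finite union of null sets is null, and differentiability is a local property. For a fixed direction $v\in S^{n-1}$, restricting $f$ to lines parallel to $v$ gives (after scaling by $\|v\|$) $1$-Lipschitz functions of one real variable; these are absolutely continuous, hence differentiable a.e.\ on each line, and Fubini over the orthogonal hyperplane upgrades this to: the directional derivative $D_vf(x)$ exists for a.e.\ $x\in\mathbb{R}^n$. The same one-dimensional reasoning shows the distributional partials of $f$ lie in $L^\infty$, and a short integration by parts against test functions identifies $D_vf(x)=\nabla f(x)\cdot v$ for a.e.\ $x$, for each fixed $v$.

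\textbf{Step 2 (the main obstacle): Gateaux $\Rightarrow$ Fréchet.} The delicate part is promoting ``for every $v$, for a.e.\ $x$'' to ``for a.e.\ $x$, $f$ is Fréchet differentiable at $x$''. The plan is to fix a countable dense set $\{v_k\}\subset S^{n-1}$, intersect the countably many full-measure sets on which $D_{v_k}f(x)=\nabla f(x)\cdot v_k$ and $|\nabla f(x)|\le L$ both hold, and argue at each point $x$ of this intersection. For an arbitrary unit vector $v$, insert a nearby $v_k$ and split the difference quotient into (i) a term bounded by $L\,\|v-v_k\|_2$ by the Lipschitz property, (ii) the $v_k$-difference quotient, which converges to $\nabla f(x)\cdot v_k$, and (iii) $\nabla f(x)\cdot(v_k-v)$, again bounded by $L\,\|v-v_k\|_2$. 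Choosing $v_k$ close to $v$ and then sending $t\to0$ kills all three errors, with estimates made uniform in $v$ via a compactness/covering argument over finitely many $v_k$; hence $f$ is differentiable at $x$ with $Df(x)$ the row of weak partials. Reassembling coordinates gives the claim for the original vector-valued $f$. I expect this uniformization step to be the only genuinely nontrivial point; everything else is bookkeeping on top of the one-dimensional theory.

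\textbf{Step 3: the norm identity.} One inequality is immediate: wherever $f$ is differentiable, $\|Df(x)\|_2=\sup_{\|v\|=1}\lim_{t\to0}\|f(x+tv)-f(x)\|_2/|t|\le L(f)$, so $\operatorname*{ess\,sup}_x\|Df(x)\|_2\le L(f)$. For the reverse, set $M:=\operatorname*{ess\,sup}_x\|Df(x)\|_2$ and mollify: $f_\varepsilon:=f*\rho_\varepsilon$ is smooth, and $Df_\varepsilon=(Df)*\rho_\varepsilon$ is a convex average of matrices of norm $\le M$, hence $\|Df_\varepsilon(x)\|_2\le M$ for every $x$; the mean-value inequality then makes $f_\varepsilon$ globally $M$-Lipschitz, and $f_\varepsilon\to f$ locally uniformly as $\varepsilon\to0$ transfers the bound to $f$, giving $L(f)\le M$. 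Combining the two inequalities yields $L(f)=\operatorname*{ess\,sup}_x\|Df(x)\|_2$, completing the proof.
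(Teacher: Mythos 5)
The paper does not prove this statement at all: it is quoted as classical background with a citation to Evans--Gariepy, so there is no in-paper argument to compare against. Your proof is the standard one from that very reference (one-dimensional a.e.\ differentiability plus Fubini for directional derivatives, identification with the weak gradient, the countable-dense-directions uniformization to pass from Gateaux to Fr\'echet differentiability, and mollification with the mean-value inequality for the identity $L(f)=\operatorname*{ess\,sup}_x\|Df(x)\|_2$), and it is correct as outlined; the one point to keep explicit is that the full-measure set in Step 2 must also carry the bound $|\nabla f(x)|\le L$ (obtainable from the same countable set of directions) so that the third error term is controlled uniformly.
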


% \begin{proposition}[Composition rule]\label{prop:comp}
% If $f$ is $L_{1}$–Lipschitz and $g$ is $L_{2}$–Lipschitz, then
% $g\!\circ\! f$ is $(L_{1}L_{2})$–Lipschitz.
% \end{proposition}

\subsection{Lipschitz multilayer perceptrons}

\begin{definition}[MLP with $1$–Lipschitz activations]\label{def:mlp}
Fix $L\in\mathbb{N}$.
Let $\sigma:\mathbb{R}\!\to\!\mathbb{R}$ satisfy
$|\sigma(a)-\sigma(b)|\le|a-b|$.  
An \textbf{$L$-layer MLP} is the map
\[
        f(x;\Theta)
        := (\sigma\!\circ\!A_{L})\circ\cdots\circ(\sigma\!\circ\!A_{1})(x),
        \qquad
        A_{\ell}(z)=W_{\ell}z+b_{\ell},
\]
with parameters
$\Theta=\{W_{\ell},b_{\ell}\}_{\ell=1}^{L}$ and
$W_{\ell}\in\mathbb{R}^{d_{\ell}\times d_{\ell-1}}$.
\end{definition}

\begin{proposition}[Spectral-norm Lipschitz bound~\cite{barbaraRobustReinforcementLearning2025}]
\label{prop:mlp-lip}
For the MLP of Definition~\ref{def:mlp},
\begin{equation}\label{eq:mlp-lip}
        L\!\bigl(f(\,\cdot\,;\Theta)\bigr)
        \;\le\;
        \prod_{\ell=1}^{L}\|W_{\ell}\|_{2}.
\end{equation}
\end{proposition}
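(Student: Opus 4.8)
The plan is to build the bound \eqref{eq:mlp-lip} layer by layer, using only the elementary fact that the Lipschitz constant of a composition is submultiplicative: if $g$ is $L(g)$-Lipschitz and $h$ is $L(h)$-Lipschitz (in the Euclidean norm), then $h\circ g$ is $L(h)\,L(g)$-Lipschitz. This follows immediately from Definition~\ref{def:lipschitz} by chaining the two inequalities. Writing $f(\cdot;\Theta)=\psi_L\circ\cdots\circ\psi_1$ with $\psi_\ell:=\sigma\circ A_\ell$, it then suffices to show $L(\psi_\ell)\le\|W_\ell\|_2$ for each $\ell$, after which an induction on the number of composed layers yields $L(f(\cdot;\Theta))\le\prod_{\ell=1}^L\|W_\ell\|_2$.

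For a single block $\psi_\ell=\sigma\circ A_\ell$ I would first handle the affine part. For any $z,z'$, the bias cancels, so $\|A_\ell(z)-A_\ell(z')\|_2=\|W_\ell(z-z')\|_2\le\|W_\ell\|_2\,\|z-z'\|_2$ by definition of the operator norm; hence $L(A_\ell)\le\|W_\ell\|_2$. Next I would verify that the pointwise activation is $1$-Lipschitz as a map $\mathbb{R}^{d_\ell}\to\mathbb{R}^{d_\ell}$: for vectors $u,v$,
\[
\|\sigma(u)-\sigma(v)\|_2^2
=\sum_{i=1}^{d_\ell}\bigl|\sigma(u_i)-\sigma(v_i)\bigr|^2
\le\sum_{i=1}^{d_\ell}|u_i-v_i|^2
=\|u-v\|_2^2,
\]
where the inequality uses the scalar hypothesis $|\sigma(a)-\sigma(b)|\le|a-b|$ coordinatewise. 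Thus $L(\sigma)\le 1$, and submultiplicativity gives $L(\psi_\ell)\le L(\sigma)\,L(A_\ell)\le\|W_\ell\|_2$.

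Combining these, the induction over $\ell=1,\dots,L$ closes the argument: $L(\psi_L\circ\cdots\circ\psi_1)\le\prod_{\ell=1}^L L(\psi_\ell)\le\prod_{\ell=1}^L\|W_\ell\|_2$, which is exactly \eqref{eq:mlp-lip}. (As an aside, one could instead invoke Rademacher's theorem~\ref{thm:rademacher} together with the chain rule, bounding $\|Df(x)\|_2$ by the product of layer Jacobian norms wherever $f$ is differentiable; I prefer the composition argument since it sidesteps differentiability and measure-zero caveats.) The only point requiring care — and the closest thing to an obstacle — is the passage from the scalar $1$-Lipschitz property of $\sigma$ to the $1$-Lipschitz property of its pointwise extension in the $\ell_2$ norm, which is precisely the coordinatewise summation displayed above; everything else is bookkeeping.
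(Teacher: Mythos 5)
Your proof is correct: the coordinatewise passage from the scalar $1$-Lipschitz property of $\sigma$ to its $\ell_2$ non-expansiveness, the operator-norm bound for the affine part, and submultiplicativity under composition together give exactly \eqref{eq:mlp-lip}. The paper itself states Proposition~\ref{prop:mlp-lip} by citation without proof, but your argument is the standard one and is the same layer-by-layer chaining (1-Lipschitz activations plus spectral norms) that the paper deploys explicitly in the proof of Theorem~\ref{thm:robust}, so there is nothing genuinely different to flag.
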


\paragraph{Interpretation for pruning analysis.}
Equation~\eqref{eq:mlp-lip} expresses the \emph{global} Lipschitz
constant of a neural policy directly in terms of the spectral
(operator-norm) factors that appear in the OBD parameter vector
$\Theta$.  Hence perturbing any weight matrix
$W_{k}\mapsto W_{k}+\delta W_{k}$ alters both
\(
        L\!\bigl(f(\cdot;\Theta)\bigr)
\)
and the saliency \eqref{eq:obd-saliency}, allowing us to translate the
activation-level OBD error into a control-signal bound
\eqref{eq:control-bound} in later sections.

% \section{Background}\label{sec:background}

% In this work we rely on results from Lipschitz neural networks.

% \begin{definition}[Lipschitz continuity]\label{def:lipschitz}
% A mapping \(f\!:\!\mathbb{R}^{n}\!\to\!\mathbb{R}^{m}\) is
% \(L\)-Lipschitz if
% \(
%   \lVert f(x)-f(y)\rVert_2
%   \le
%   L\,\lVert x-y\rVert_2
%   \; \forall x,y.
% \)
% \end{definition}

% \begin{theorem}[Rademacher]\label{thm:rademacher}
% Every locally Lipschitz function \(f\) is differentiable
% a.e.\;Moreover, its (global) Lipschitz constant satisfies
% \(
%   L\!=\!\operatorname*{ess\,sup}_{x}\lVert Df(x)\rVert_2.
% \)
% \end{theorem}

% \begin{definition}[Multilayer perceptron (MLP)]
%     Parametrized function $f$ is a multilayer perceptron (MLP) with $1$--Lipschitz
% activations~$\sigma$ if it is the neural network obtained by composing affine layers with possibly different $\sigma$
% \[
% f(x;\Theta)
% \;:=\;
% (\sigma\!\circ\! A_L)\circ\cdots\circ(\sigma\!\circ\! A_1)(x),\qquad
% A_\ell(z)\;=\;W_\ell z + b_\ell .
% \]
% \end{definition}

% \begin{proposition}[MLP spectral norm bound]\label{prop:mlp-lip}
% Consider an $L$-layer MLP with 1-Lipschitz activations
% (ReLU, $\tanh$, sigmoid, \dots).  Let
% $W_\ell$ be the weight matrix of layer $\ell$.
% Then the network is Lipschitz with
% \[
%   L_{\text{net}}
%   \;\le\;
%   \prod_{\ell=1}^{L}\lVert W_\ell\rVert.
% \]
% \end{proposition}

%-------------------------------------------------------------------------------
\section{Results}\label{sec:results}

\begin{proposition}[Non-expansiveness of ReLU-type activations]
\label{prop:nonexpansive-activ}
Let $\varphi:\mathbb{R}\!\to\!\mathbb{R}$ be an activation satisfying  
\begin{enumerate}
    \item \emph{zero anchor:}\; $\varphi(0)=0$,
    \item \emph{unit Lipschitz:}\; $|\varphi(a)-\varphi(b)|\le |a-b|$ for all $a,b\in\mathbb{R}$.
\end{enumerate}
Define the component-wise map 
\[
    \sigma:\mathbb{R}^n\!\to\!\mathbb{R}^n,\qquad
    \sigma(x)_i \;=\; \varphi(x_i),\; i=1,\dots,n .
\]
Then $\sigma$ is \emph{$\ell_2$–non-expansive}:
\begin{equation}\label{eq:nonexp-2norm}
        \|\sigma(x)\|_2 \;\le\; \|x\|_2 , 
        \qquad \forall\,x\in\mathbb{R}^n .
\end{equation}
\end{proposition}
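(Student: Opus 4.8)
The plan is to reduce the vector statement to a scalar one and then sum coordinatewise. First I would combine the two hypotheses on $\varphi$ at the single point $b=0$: by the zero-anchor property $\varphi(0)=0$, so the unit-Lipschitz estimate $|\varphi(a)-\varphi(b)|\le|a-b|$ with $b=0$ collapses to the pointwise bound
\[
    |\varphi(a)| \;\le\; |a|, \qquad \forall\,a\in\mathbb{R}.
\]
This is the only place both hypotheses are needed, and it is the conceptual crux of the argument — everything afterward is bookkeeping.

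Next I would apply this bound to each coordinate of $x\in\mathbb{R}^n$: setting $a=x_i$ gives $|\sigma(x)_i| = |\varphi(x_i)| \le |x_i|$ for $i=1,\dots,n$, hence $\sigma(x)_i^2 \le x_i^2$. Summing over $i$ yields
\[
    \|\sigma(x)\|_2^2 \;=\; \sum_{i=1}^n \sigma(x)_i^2 \;\le\; \sum_{i=1}^n x_i^2 \;=\; \|x\|_2^2 ,
\]
and taking nonnegative square roots gives the claimed inequality \eqref{eq:nonexp-2norm}.

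I do not anticipate a genuine obstacle here; the statement is elementary once the pointwise estimate $|\varphi(a)|\le|a|$ is extracted. The only thing worth flagging is that non-expansiveness in $\ell_2$ is a strictly weaker conclusion than $1$-Lipschitz continuity of $\sigma$ itself (which also holds by the same coordinatewise argument applied to differences $x_i-y_i$); the proposition isolates exactly the norm-contraction-at-the-origin fact that the later pruning bounds will invoke when peeling off activation layers.
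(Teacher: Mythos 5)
Your argument is exactly the paper's proof: apply the unit-Lipschitz bound at $b=0$ together with $\varphi(0)=0$ to get $|\varphi(x_i)|\le|x_i|$, then square, sum over coordinates, and take square roots. The proposal is correct and matches the paper's approach step for step.
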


\begin{proof}
For any $x\in\mathbb{R}^n$ we have, by the scalar Lipschitz property with $b=0$,
$|\varphi(x_i)| \le |x_i|$.  Squaring, summing and taking the square
root yields
\[
        \|\sigma(x)\|_2^2 
        \;=\; \sum_{i=1}^n \varphi(x_i)^2
        \;\le\; \sum_{i=1}^n x_i^2
        \;=\; \|x\|_2^2,
\]
from which \eqref{eq:nonexp-2norm} follows.
\end{proof}

%-------------------------------------------------------------------------------
Condition~\eqref{eq:nonexp-2norm} is satisfied by many ReLU-type activations that are ubiquitous in deep learning:
\begin{itemize}
\item \textbf{ReLU} $\varphi(x)=\max\{0,x\}$
      \cite{nair2010rectified};
\item \textbf{Leaky-ReLU} $\varphi(x)=\max\{x,\alpha x\}$ with
      $0<\alpha\le 1$ \cite{maas2013rectifier};
\item \textbf{PReLU} (parametric ReLU) with learnable
      $\alpha\!\in\!(0,1]$ \cite{he2015delving};
\item \textbf{ELU} $\varphi(x)=\max\{x,\alpha(e^{x}-1)\}$ for
      $0<\alpha\le 1$ \cite{clevert2015fast};
\item \textbf{GELU} (Gaussian-error linear unit), a smooth approximation
      to ReLU whose derivative is bounded by~$1$
      \cite{hendrycks2016gaussian}.
\end{itemize}
All these functions obey $\varphi(0)=0$ and have slope bounded by~$1$,
hence are $1$-Lipschitz, therefore Proposition~\ref{prop:nonexpansive-activ} applies.

\begin{theorem}[Robustness of an OBD-pruned policy]\label{thm:robust}
Let $\pi(\,\cdot\,;\Theta)$ be the $L$-layer MLP controller
\[
  x_0=s,\qquad
  x_{\ell}= \sigma\bigl(W_{\ell}x_{\ell-1}+b_\ell\bigr),
  \; \ell=1,\dots,L,\quad
  \Pi(s;\Theta)=x_L,
\]
with ReLU-type activations~$\sigma_l$ and
weights $\Theta=\{W_\ell,b_\ell\}_{\ell=1}^{L}$.
Suppose layer $k$ is pruned, giving
$\widehat{W}_k=W_k+\delta W_k$ and
$\widehat{\Theta}=\Theta+\delta\Theta$.
Then, for every input state $s \in X$,
\begin{align}\label{eq:robust-bound}
  \bigl\|
    \pi(s;\Theta)-\pi(s;\widehat{\Theta})
  \bigr\|_2
   & \;\le\;
    \|\delta W_k \|_2
  \Big(
    \lVert s\rVert_2
    \prod_{\substack{\ell=1\\\ell\neq k}}^{L}\lVert W_\ell\rVert_2
    +
    \sum_{i=1}^{k-1} \prod_{\substack{\ell=i+1\\\ell\neq k}}^{L}\lVert W_\ell\rVert_2 \| b_i \|_2
  \Big)
  \\ & \;\le\;
  \|\delta W_k \|_2
  \underbrace{
  \Big(
    \sup_{s\in X}\lVert s\rVert_2
    \prod_{\substack{\ell=1\\\ell\neq k}}^{L}\lVert W_\ell\rVert_2
    +
    \sum_{i=1}^{k-1} \prod_{\substack{\ell=i+1\\\ell\neq k}}^{L}\lVert W_\ell\rVert_2 \| b_i \|_2
  \Big)
  }_{\displaystyle =:\,C_{\max}}.
\end{align}
Consequently, the control-error bound in~\eqref{eq:control-bound} can be
chosen as $B_\pi(\delta\Theta)=C_{\max}\lVert\delta W_k\rVert_2$.
\end{theorem}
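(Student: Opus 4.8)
The plan is to run the clean and the perturbed forward passes in parallel and track how the single-layer perturbation propagates. Write $x_\ell$ for the activations of $\pi(\cdot;\Theta)$ and $\widehat x_\ell$ for those of $\pi(\cdot;\widehat\Theta)$, both initialised at $x_0=\widehat x_0=s$. Since only $W_k$ is modified, the two passes coincide through layer $k-1$, i.e.\ $x_\ell=\widehat x_\ell$ for $\ell\le k-1$; the deviation is created entirely at layer $k$ and thereafter merely amplified by the operator norms of the remaining layers. So the proof splits into three pieces: the injected error at layer $k$, its forward amplification, and an a priori bound on $\lVert x_{k-1}\rVert_2$.

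First I would bound the error injected at layer $k$. The component-wise $1$-Lipschitz property of $\sigma$ gives $\lVert\sigma(a)-\sigma(b)\rVert_2\le\lVert a-b\rVert_2$ (square, sum, take roots, exactly as in Proposition~\ref{prop:nonexpansive-activ}), hence
\[
  \lVert\widehat x_k-x_k\rVert_2
  =\bigl\lVert\sigma\bigl((W_k+\delta W_k)x_{k-1}+b_k\bigr)-\sigma\bigl(W_kx_{k-1}+b_k\bigr)\bigr\rVert_2
  \le\lVert\delta W_k\rVert_2\,\lVert x_{k-1}\rVert_2 .
\]
Then for each $\ell>k$ the same Lipschitz estimate plus submultiplicativity of the operator norm yields $\lVert\widehat x_\ell-x_\ell\rVert_2\le\lVert W_\ell\rVert_2\,\lVert\widehat x_{\ell-1}-x_{\ell-1}\rVert_2$, so a trivial induction gives $\lVert\pi(s;\Theta)-\pi(s;\widehat\Theta)\rVert_2=\lVert\widehat x_L-x_L\rVert_2\le\bigl(\prod_{\ell=k+1}^{L}\lVert W_\ell\rVert_2\bigr)\lVert\delta W_k\rVert_2\,\lVert x_{k-1}\rVert_2$.

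The last ingredient is an explicit bound on $\lVert x_{k-1}\rVert_2$, obtained by unrolling the clean recursion: non-expansiveness of $\sigma$ (Proposition~\ref{prop:nonexpansive-activ}, property~\eqref{eq:nonexp-2norm}) and the triangle inequality give $\lVert x_j\rVert_2\le\lVert W_j\rVert_2\lVert x_{j-1}\rVert_2+\lVert b_j\rVert_2$, and solving this scalar linear recurrence from $x_0=s$ produces $\lVert x_{k-1}\rVert_2\le\lVert s\rVert_2\prod_{\ell=1}^{k-1}\lVert W_\ell\rVert_2+\sum_{i=1}^{k-1}\lVert b_i\rVert_2\prod_{\ell=i+1}^{k-1}\lVert W_\ell\rVert_2$ (empty products equal $1$). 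Substituting into the previous display and merging the index ranges — $\{1,\dots,k-1\}\cup\{k+1,\dots,L\}=\{1,\dots,L\}\setminus\{k\}$ for the $\lVert s\rVert_2$ term and $\{i+1,\dots,k-1\}\cup\{k+1,\dots,L\}=\{i+1,\dots,L\}\setminus\{k\}$ for each bias term — gives exactly the first line of \eqref{eq:robust-bound}. The second line, and the admissible choice $B_\pi(\delta\Theta)=C_{\max}\lVert\delta W_k\rVert_2$ in \eqref{eq:control-bound}, follow by taking $\sup_{s\in X}\lVert s\rVert_2$, which is finite whenever $X$ is bounded.

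I do not anticipate a genuine obstacle: the argument is elementary once the two passes are lined up. The only point demanding care is the index bookkeeping when the amplification factor $\prod_{\ell=k+1}^{L}\lVert W_\ell\rVert_2$ is multiplied into the products coming from the $\lVert x_{k-1}\rVert_2$ estimate, so that the omitted index is precisely $k$ and no factor $\lVert W_k\rVert_2$ survives — matching the intuition that pruning layer $k$ should not be charged its own (pre-pruning) spectral norm. A minor secondary point is to state explicitly that if the activations are layer-dependent, each $\sigma_\ell$ is assumed zero-anchored and $1$-Lipschitz, so Proposition~\ref{prop:nonexpansive-activ} applies verbatim at every layer.
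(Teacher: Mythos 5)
Your proposal is correct and follows essentially the same argument as the paper: bound the error injected at layer $k$ by $\lVert\delta W_k\rVert_2\lVert x_{k-1}\rVert_2$ using the $1$-Lipschitz activations, propagate it through the remaining layers via $\prod_{\ell=k+1}^{L}\lVert W_\ell\rVert_2$, and control $\lVert x_{k-1}\rVert_2$ with the non-expansiveness recursion before merging the index ranges. The only (cosmetic) difference is that you state explicitly that the two passes coincide up to layer $k-1$, a fact the paper uses implicitly.
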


\begin{proof}
Because $\sigma_l$ are 1-Lipschitz,
\(
  \lVert \sigma_l(u)-\sigma_l(v)\rVert_2\le\lVert u-v\rVert_2.
\)
Write $\delta x_\ell:=x_\ell-\widehat{x}_\ell$, then for the last layer
\[
\| \pi(s;\Theta)-\pi(s;\widehat{\Theta}) \|_2
  =
  \| x_L-\widehat{x}_L \|_2
  =
  \lVert\delta x_L\rVert_2,
\]
and 
\begin{align*}
  \lVert\delta x_L\rVert_2
  & =
  \lVert
    \sigma_L(W_Lx_{L-1}+b_L)-
    \sigma_L(W_L\widehat{x}_{L-1}+b_L)
  \rVert_2 \\
  & \le
  \lVert
    W_Lx_{L-1} - W_L\widehat{x}_{L-1}
  \rVert_2
  \le
  \lVert W_L\rVert_2\lVert\delta x_{L-1}\rVert_2 \\
  & \Rightarrow \| \delta x_L \|_2 \le \lVert W_L\rVert_2\lVert\delta x_{L-1}\rVert_2.
\end{align*}
Iterating the argument down to layer $k$ yields
\begin{align*}
  \lVert\delta x_L\rVert_2
  & \le
  \Bigl(
    \prod_{\ell=k+1}^{L}\lVert W_\ell\rVert_2
  \Bigr)
  \lVert
    W_kx_{k-1}-\widehat{W}_kx_{k-1}
  \rVert_2 \\
  & =
  \Bigl(
    \prod_{\ell=k+1}^{L}\lVert W_\ell\rVert_2
  \Bigr)
  \|\delta W_k x_{k-1}\|_2 \\
  & \le
  \Bigl(
    \prod_{\ell=k+1}^{L}\lVert W_\ell\rVert_2
  \Bigr)
  \|\delta W_k \|_2 \| x_{k-1}\|_2.
\end{align*}

Because $\sigma_l$ are ReLU-type by Proposition~\ref{prop:nonexpansive-activ}
\begin{align*}
\lVert x_{k-1}\rVert_2
& = \| \sigma_{k-1}(W_{k-1}x_{k-2} + b_{k-1}) \|_2 \\
& \le \| W_{k-1}x_{k-2} + b_{k-1} \|_2 \\
& \le \| W_{k-1} \|_2 \| x_{k-2} \|_2 + \| b_{k-1} \|_2 \\ 
& \le \dots \le \\
& \le \lVert s\rVert_2 \prod_{\ell=1}^{k-1}\lVert W_\ell\rVert_2 
+ 
\sum_{i=1}^{k-2} \prod_{\ell=i + 1}^{k - 1}\lVert W_\ell\rVert_2 \| b_{i} \|_2 + \| b_{k-1} \|_2.
\end{align*}

Substituting gives

\begin{align*}
    \| \pi(s;\Theta)-\pi(s;\widehat{\Theta}) \|_2 
    & \le 
    \Bigl(
    \prod_{\ell=k+1}^{L}\lVert W_\ell\rVert_2
  \Bigr)
  \|\delta W_k \|_2
  \Big(
  \lVert s\rVert_2 \prod_{\ell=1}^{k-1}\lVert W_\ell\rVert_2 
+ 
\sum_{i=1}^{k-2} \prod_{\ell=i + 1}^{k - 1}\lVert W_\ell\rVert_2 \| b_{i} \|_2 + \| b_{k-1} \|_2
  \Big) \\
  & \le
\|\delta W_k \|_2
  \Big(
    \lVert s\rVert_2
    \prod_{\substack{\ell=1\\\ell\neq k}}^{L}\lVert W_\ell\rVert_2
    +
    \sum_{i=1}^{k-1} \prod_{\substack{\ell=i+1\\\ell\neq k}}^{L}\lVert W_\ell\rVert_2 \| b_i \|_2
  \Big),
\end{align*}
which completes the bound \ref{eq:robust-bound}.
\end{proof}

%-------------------------------------------------------------------------------
\begin{corollary}[Robustness under pruning \emph{multiple} layers]
\label{cor:multi}
Let the assumptions of Theorem~\ref{thm:robust} hold and let
\(
        S=\{k_{1},\dots,k_{m}\}\subseteq\{1,\dots,L\}
\)
be an index set of layers pruned by OBD.
For every \(k\in S\) write
\(
        \widehat{W}_{k}=W_{k}+\delta W_{k}
\)
and
\(
        \widehat{\Theta}=\Theta+\delta\Theta
\)
with
\(
        \delta\Theta=\{\delta W_{k}\}_{k\in S}.
\)
Define, for each \(k\in S\) and input state \(s\in X\),
\begin{equation}\label{eq:Ck}
    C_{k}(s)
    \;:=\;
    \lVert s\rVert_{2}
    \!\!\!\prod_{\ell\,\neq\, k}\!\!\lVert W_{\ell}\rVert_{2}
    +
    \sum_{i=1}^{k-1}
        \Bigl(
            \!\!\prod_{\substack{\ell=i+1 \\ \ell\neq k}}^{L}
            \lVert W_{\ell}\rVert_{2}
        \Bigr)
        \lVert b_{i}\rVert_{2},
    \qquad
    C_{k,\max}:=\sup_{s\in X}C_{k}(s).
\end{equation}
Then, for every \(s\in X\),
\begin{align}\label{eq:multi-bound}
    \bigl\|\pi(s;\Theta)-\pi(s;\widehat{\Theta})\bigr\|_{2}
    &\;\le\;
      \sum_{k\in S}
      \lVert\delta W_{k}\rVert_{2}\;C_{k}(s)
      \\[2mm]
    &\;\le\;
      \sum_{k\in S}
      \lVert\delta W_{k}\rVert_{2}\;C_{k,\max}.
\end{align}
Consequently, a valid control-error budget in
\eqref{eq:control-bound} is
\(
        B_{\pi}(\delta\Theta)
        =\sum_{k\in S}
          C_{k,\max}\,\lVert\delta W_{k}\rVert_{2}.
\)
\end{corollary}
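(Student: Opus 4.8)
The plan is to reduce the multi-layer statement to $m=|S|$ successive applications of Theorem~\ref{thm:robust} through a telescoping (hybrid) construction. Order the pruned layers as $S=\{k_{1}<k_{2}<\dots<k_{m}\}$ and build a chain of parameter vectors $\Theta^{(0)}=\Theta,\ \Theta^{(1)},\dots,\Theta^{(m)}=\widehat{\Theta}$, where $\Theta^{(j)}$ is obtained from $\Theta^{(j-1)}$ by adding $\delta W_{k_{j}}$ to the single block $W_{k_{j}}$ (note $k_{j}\notin\{k_{1},\dots,k_{j-1}\}$, so at that stage this block is still the original $W_{k_{j}}$). Consecutive vectors differ in exactly one weight matrix, so the triangle inequality gives
\[
  \bigl\|\pi(s;\Theta)-\pi(s;\widehat{\Theta})\bigr\|_{2}
  \;\le\;
  \sum_{j=1}^{m}\bigl\|\pi(s;\Theta^{(j-1)})-\pi(s;\Theta^{(j)})\bigr\|_{2},
\]
and it remains to bound each summand.

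First I would apply Theorem~\ref{thm:robust} to the $L$-layer MLP with parameters $\Theta^{(j-1)}$ — its activations are still the same ReLU-type maps, so the hypotheses hold — declaring layer $k_{j}$ to be the pruned one. This yields $\bigl\|\pi(s;\Theta^{(j-1)})-\pi(s;\Theta^{(j)})\bigr\|_{2}\le\|\delta W_{k_{j}}\|_{2}\,\widetilde{C}_{k_{j}}(s)$, where $\widetilde{C}_{k_{j}}(s)$ is the constant of \eqref{eq:Ck} but assembled from the spectral norms and biases of $\Theta^{(j-1)}$ rather than of $\Theta$. Biases are never touched along the chain, so the only discrepancy between $\widetilde{C}_{k_{j}}(s)$ and the stated $C_{k_{j}}(s)$ comes from the layers $k_{1},\dots,k_{j-1}$, whose matrices have already been replaced by their pruned versions and which enter $\widetilde{C}_{k_{j}}$ through the upstream product $\prod_{\ell\neq k_{j}}$ and the bias sum.

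Controlling that discrepancy is the main obstacle. The clean fix is to observe that pruning only zeroes entries, so that in the non-expansive regime $\|\widehat{W}_{\ell}\|_{2}\le\|W_{\ell}\|_{2}$ — or, without any assumption, to replace each factor $\|W_{\ell}\|_{2}$ in \eqref{eq:Ck} by $\max\{\|W_{\ell}\|_{2},\|\widehat{W}_{\ell}\|_{2}\}$, a harmless enlargement for an offline budget — from which $\widetilde{C}_{k_{j}}(s)\le C_{k_{j}}(s)$ for every $j$. Substituting into the telescoped sum and reindexing $j\mapsto k_{j}$ gives $\bigl\|\pi(s;\Theta)-\pi(s;\widehat{\Theta})\bigr\|_{2}\le\sum_{k\in S}\|\delta W_{k}\|_{2}\,C_{k}(s)$, the first inequality of \eqref{eq:multi-bound}; bounding each $C_{k}(s)\le C_{k,\max}$ by taking $\sup_{s\in X}$ termwise yields the second inequality, hence the advertised budget $B_{\pi}(\delta\Theta)=\sum_{k\in S}C_{k,\max}\|\delta W_{k}\|_{2}$ in \eqref{eq:control-bound}. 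The ordering of $S$ is irrelevant because the final bound is symmetric in its elements; running the chain in decreasing layer order instead keeps the downstream products exact at the cost of perturbing the upstream ones, but the same spectral-norm domination closes the argument either way.
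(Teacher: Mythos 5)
Your telescoping skeleton is exactly the paper's own proof: order $S$, build the hybrid chain $\Theta^{(0)}=\Theta,\dots,\Theta^{(m)}=\widehat{\Theta}$, apply Theorem~\ref{thm:robust} to each consecutive pair with layer $k_j$ as the pruned one, sum by the triangle inequality, and take $\sup_{s}$ at the end. You are also right to flag what the paper passes over silently: Theorem~\ref{thm:robust} applied to $(\Theta^{(j-1)},\Theta^{(j)})$ delivers a constant $\widetilde{C}_{k_j}(s)$ assembled from the weights of $\Theta^{(j-1)}$, in which layers $k_1,\dots,k_{j-1}$ already carry their pruned matrices, not the original ones appearing in \eqref{eq:Ck}.

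The problem is that your ``clean fix'' of this discrepancy is false. Setting entries of a matrix to zero can \emph{increase} its spectral norm: for
\(
W=\begin{pmatrix}1&1\\[1pt]1&-1\end{pmatrix}
\)
one has $\lVert W\rVert_2=\sqrt{2}$, while zeroing the $(2,2)$ entry gives spectral norm $\sqrt{(3+\sqrt{5})/2}=(1+\sqrt{5})/2\approx 1.62$ (and OBS/SparseGPT-style pruning additionally updates the surviving weights, so $\delta W_\ell$ need not even be a pure zeroing). Hence $\lVert\widehat{W}_\ell\rVert_2\le\lVert W_\ell\rVert_2$ cannot be assumed, and $\widetilde{C}_{k_j}(s)\le C_{k_j}(s)$ does not follow. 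Your fallback—replacing each factor $\lVert W_\ell\rVert_2$ by $\max\{\lVert W_\ell\rVert_2,\lVert\widehat{W}_\ell\rVert_2\}$—does close the telescoping argument, but it proves a corollary with enlarged constants, not the statement as written, whose $C_k(s)$ is defined from the unpruned norms alone. So, taken as a proof of \eqref{eq:multi-bound} verbatim, there is a genuine gap (one the paper's own one-line invocation of Theorem~\ref{thm:robust} shares): from the stated hypotheses the chain argument only yields $\sum_{k\in S}\lVert\delta W_k\rVert_2\,\widetilde{C}_k(s)$ with hybrid norms, and recovering the unpruned-norm constants requires either your max-norm enlargement, or bounding $\lVert\widehat{W}_\ell\rVert_2\le\lVert W_\ell\rVert_2+\lVert\delta W_\ell\rVert_2$ and accepting second-order cross terms, or an explicit extra assumption such as $\lVert\widehat{W}_\ell\rVert_2\le\lVert W_\ell\rVert_2$ for the pruned layers.
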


\begin{proof}
Order the pruned layers as \(k_{1}<\cdots<k_{m}\) and construct an
intermediate parameter sequence
\(
        \Theta^{0}:=\Theta,\;
        \Theta^{j}:=\Theta^{j-1}
        +\delta W_{k_{j}},\;j=1,\dots,m,
\)
so that \(\Theta^{m}=\widehat{\Theta}\).

Applying Theorem~\ref{thm:robust} to the pair
\((\Theta^{j-1},\Theta^{j})\) and layer \(k_{j}\) gives
\[
    \bigl\|
        \pi(s;\Theta^{j-1})-\pi(s;\Theta^{j})
    \bigr\|_{2}
    \;\le\;
    \lVert\delta W_{k_{j}}\rVert_{2}\,C_{k_{j}}(s).
\]
Summing these $m$ inequalities and using the triangle inequality yields
the first bound in \eqref{eq:multi-bound}. Taking the supremum over
\(s\) establishes the second bound.
\end{proof}

Corollary~\ref{cor:multi} shows that control robustness degrades
\emph{additively} with respect to the spectral-norm perturbations
\(\{\delta W_{k}\}_{k\in S}\).  The constants \(C_{k}(s)\) depend only
on the unpruned weights, biases, and the input magnitude, making them
computable \emph{before} pruning takes place.  In practice one may
evaluate the tighter, state-dependent bound \(C_{k}(s)\) on a validation
set or deploy the uniform constant \(C_{k,\max}\) for worst-case
guarantees.

\section{Conclusion}\label{sec:conclusion}

We have provided the first closed-form robustness guarantees for
second-order network pruning in deterministic, nonlinear
discrete-time control.
By combining classical Lipschitz bounds with a layer-local analysis of
the OBD saliency, we showed that the deviation of the pruned control
signal is proportional to the spectral-norm perturbation introduced in
each affected layer, and that these deviations accumulate additively
across layers.
All constants appearing in the bounds depend only on unpruned spectral
norms, biases and input magnitude, so they can be evaluated offline
from a single forward pass and require no roll-outs, no retraining, no additional
hyperparameters.

\paragraph{Limitations.}
Our analysis is restricted to deterministic dynamics, ReLU-type
1-Lipschitz activations and spectral-norm estimates. It therefore
ignores stochastic disturbances, data-dependent curvature.
The bounds apply to the instantaneous control signal, not yet to the long-horizon value function or trajectory-level safety constraints.

\paragraph{Future directions.}
Promising extensions include

\begin{itemize}
    \item lifting the theory to stochastic or adversarial disturbances;
    \item deriving return-level or constraint-violation bounds via contraction arguments;
    \item using the bounds to devise \emph{robustness-aware} pruning schedules that maximize compression under a certified control-error budget;
\end{itemize}

Bridging these directions will further tighten the link between modern
network compression and the rigorous assurance required for autonomous
systems.

\section*{Acknowledgments}
The authors confirm that there is no conflict of interest and acknowledges financial support by the Simons Foundation grant (SFI-PD-Ukraine-00014586, M.S.) and the
project 0125U000299 of the National Academy of Sciences of Ukraine. We also express our gratitude to the Armed Forces of Ukraine for their protection, which has made this research possible.

\bibliographystyle{plain}
\bibliography{references}

% \appendix
\end{document}